\def\eqref#1{equation~\ref{#1}}
\def\1{\bm{1}}
\def\vb{{\bm{b}}}
\def\vf{{\bm{f}}}
\def\vh{{\bm{h}}}
\def\vx{{\bm{x}}}
\def\vy{{\bm{y}}}
\def\evb{{b}}
\def\evg{{g}}
\def\evh{{h}}
\def\evx{{x}}
\def\evy{{y}}
\def\evz{{z}}
\def\mW{{\bm{W}}}
\DeclareMathAlphabet{\mathsfit}{\encodingdefault}{\sfdefault}{m}{sl}
\SetMathAlphabet{\mathsfit}{bold}{\encodingdefault}{\sfdefault}{bx}{n}
\def\sD{{\mathbb{D}}}
\def\sL{{\mathbb{L}}}
\def\sN{{\mathbb{N}}}
\def\sX{{\mathbb{X}}}
\newcommand{\dnn}{\textsc{DNN} }
\newcommand{\dnns}{\textsc{DNN}s}
\newcommand{\dnnone}{$\mathcal{N}_1$}
\newcommand{\dnntwo}{$\mathcal{N}_2$}
\newcommand{\fone}{\vf_1}
\newcommand{\ftwo}{\vf_2}
\begin{document}
\title{Lossless Compression of Deep Neural Networks}
%
%
\author{Thiago Serra\inst{1}
\and
Abhinav Kumar\inst{2}
\and
Srikumar Ramalingam\inst{2}
} 
\authorrunning{T. Serra et al.}
%
\institute{Bucknell University, USA 
\\
\email{thiago.serra@bucknell.edu}
\and
The University of Utah, USA
\\
\email{abhinav.kumar@utah.edu,srikumar@cs.utah.edu}
}
\maketitle              

\begin{abstract}
Deep neural networks have been successful in many predictive modeling tasks, such as image and language recognition,  
where large neural networks are often used to obtain good accuracy. 
Consequently, 
it is challenging to deploy these networks under 
limited computational resources, such as in mobile devices. 
In this work, we introduce an algorithm that 
removes units and layers of a neural network 
while not changing the output that 
is produced, 
which thus implies a lossless compression. 
This algorithm, which we denote as \texttt{LEO}~(Lossless Expressiveness Optimization), relies on Mixed-Integer Linear Programming~(MILP) to identify Rectified Linear Units~(ReLUs) with  linear behavior over the input domain. 
By using $\ell_1$ regularization to induce such behavior, 
we can benefit from training over a larger architecture than we would later use in the environment where the trained neural network is deployed. 

\keywords{Deep Learning \and Mixed-Integer Linear Programming  \and Neural Network Pruning \and Neuron Stability \and Rectified Linear Unit.}
\end{abstract}

\begin{table}
\caption{Compression of 2-hidden-layer rectifier networks trained on MNIST. Each line summarizes tests on 31 networks. Depending on how the network is trained, the higher incidence of stable units allows for more compression while preserving the trained network accuracy. For example, training with $\ell_1$ regularization induces such stability and then inactive units can be  removed. Interestingly, the small amount of regularization that improves accuracy during training also helps compressing the network later.}
\label{tab1}
\centering
\begin{tabular}{@{\extracolsep{6pt}}cccccc}
&&& \multicolumn{2}{c}{Units removed} &  Network \\
\cline{4-5}
Layer width & $\ell_1$ weight & Accuracy~(\%) &  $1^{\text{st}}$ layer & $2^{\text{nd}}$ layer &  compression~(\%) \\
\hline
25 & 0.001 & 95.76 $\pm$ 0.05 & 5.7 $\pm$ 0.3 & 5.1 $\pm$ 0.3 & 22 $\pm$ 1 \\ 
25 & 0.0002 & 97.24 $\pm$ 0.02 & 1.2 $\pm$ 0.1 & 3.0 $\pm$ 0.4 & 8.3 $\pm$ 0.7 \\ 
25 & 0 & 96.68 $\pm$ 0.03 & 0 $\pm$ 0 & 0 $\pm$ 0 & 0 $\pm$ 0 \\ 
\hline
50 & 0.001 & 96.05 $\pm$ 0.04 & 16.9 $\pm$ 0.6 & 12.5 $\pm$ 0.6 & 29.4 $\pm$ 0.7 \\ 
50 & 0.0002 & 97.81 $\pm$ 0.02 & 7.6 $\pm$ 0.4 & 7.5 $\pm$ 0.5 & 15.1 $\pm$ 0.6 \\ 
50 & 0 & 97.62 $\pm$ 0.02 & 0 $\pm$ 0 & 0 $\pm$ 0 & 0 $\pm$ 0 \\ 
\hline
100 & 0.0005 & 97.14 $\pm$ 0.02 & 36.7 $\pm$ 0.7 & 24.9 $\pm$ 0.6 & 30.8 $\pm$ 0.5 \\ 
100 & 0.0001 & 98.14 $\pm$ 0.01 & 18.6 $\pm$ 0.5 & 11.1 $\pm$ 0.7 & 14.9 $\pm$ 0.4 \\ 
100 & 0 & 98.00 $\pm$ 0.01 & 0 $\pm$ 0 & 0 $\pm$ 0 & 0 $\pm$ 0 \\ 
\end{tabular}
\end{table}

\section{Introduction}
    Deep Neural Networks (\dnns) have achieved unprecedented success in many domains of predictive modeling, such as computer vision~\cite{Krizhevsky2012,Ciresan2012,Goodfellow2013,Szegedy2015,He2016DeepRL}, natural language processing~\cite{sutskever2014sequence}, and speech~\cite{Hinton2012}. While complex architectures are usually behind such feats, it is not fully known if these results depend on such \dnns~being as wide or as deep as they currently are for some applications. 
    
    In this paper, we are interested in the compression of \dnns, especially to reduce their size and depth. 
    More generally, that relates to the following question of wide interest about neural networks: given a neural network \dnnone, can we find an \emph{equivalent} neural network \dnntwo~with a different architecture? Since a trained \dnn corresponds to a function mapping its inputs to outputs, we can formalize the equivalence among neural networks as follows~\cite{BinarizedProperties}:
    \begin{definition}[Equivalence]
        Two deep neural networks \dnnone~and \dnntwo~with associated functions $\fone : \mathbb{R}^{n_0}\rightarrow \mathbb{R}^{m}$ and $\ftwo : \mathbb{R}^{n_0}\rightarrow \mathbb{R}^{m}$, respectively, are equivalent if $\fone(\vx) = \ftwo(\vx) ~\forall~\vx \in \mathbb{R}^{n_0}$.
        \label{def:equivalent_networks_1}
    \end{definition}
    In other words, our goal is to start from a trained neural network and identify neural networks with fewer layers or smaller layer widths that would produce the exact same outputs. Since the typical input for certain applications is bounded along each dimension, such as $\vx \in [0,1]^{n_0}$ for the MNIST dataset~\cite{LeCun1998}, we can consider a broader family of neural networks that would be regarded as equivalent in practice. We formalize that idea with the concept of local equivalence:
    \begin{definition}[Local Equivalence]
        Two deep neural networks \dnnone~and \dnntwo~with associated functions $\fone : \mathbb{R}^{n_0}\rightarrow \mathbb{R}^{m}$ and $\ftwo : \mathbb{R}^{n_0}\rightarrow \mathbb{R}^{m}$, respectively, are local equivalent with respect to a domain $\sD \subseteq \mathbb{R}^{n_0}$ if $\fone(\vx) = \ftwo(\vx) ~\forall~\vx \in \sD$.
        \label{def:equivalent_networks_2}
    \end{definition}

    For a given application, local equivalence with respect to the domain of possible inputs suffices to guarantee that two networks have the same accuracy in any test. Hence, finding a smaller network that is local equivalent to the original network implies a compression in which there is no loss. In this paper, we show that simple operations such as removing or merging units and folding layers of a \dnn can yield such lossless compression under certain conditions. We denote as \emph{folding} the removal of a layer by directly connecting the adjacent layers, which is accompanied by adjusting the weights and biases of those layers accordingly. 
\section{Background}
    We study feedforward \dnns~with Rectified Linear Unit~(ReLU) activations~\cite{OriginReLU}, which are comparatively simpler than other types of activations. Nevertheless, ReLUs are currently the type of unit that is most commonly used~\cite{CurrentDNN}, which is in part due to landmark results showing their  competitive performance~\cite{nair2010rectified,ReLUGood2}.  
    
    Every network has input $\vx = [\evx_1 ~ \evx_2 ~ \dots ~ \evx_{n_0}]^T$ from a bounded domain $\sX$ and corresponding output $\vy = [\evy_1 ~ \evy_2 ~ \dots ~ \evy_m]^T$, and each hidden layer $l \in \sL = \{1,2,\dots,L\}$ has output $\vh^l = [\evh_1^l ~ \evh_2^l \dots \evh_{n_l}^l]^T$ from ReLUs indexed by $i \in \sN_l = \{1, 2, \ldots, n_l\}$. 
    Let $\mW^l$ be the $n_l \times n_{l-1}$ matrix where each row corresponds to the weights of a neuron of layer $l$, and  let $\vb^l$ be vector of biases associated with the units in layer $l$. With $\vh^0$ for $\vx$ and $\vh^{L+1}$ for $\vy$,  the output of each unit $i$ in layer $l$ consists of an affine function $\evg_i^l = \mW_{i}^l \vh^{l-1} + \vb_i^l$ followed by the ReLU activation $\evh_i^l = \max\{0, \evg_i^l\}$. The unit $i$ in layer $l$ is denoted \emph{active} when $\vh_i^l > 0$ and \emph{inactive} otherwise. \dnns~consisting solely of ReLUs are denoted \emph{rectifier} networks, and their associated functions are always piecewise linear~\cite{arora2018understanding}.

    \subsection{Mixed-Integer Linear Programming}
        Our work is primarily based on the fast growing literature on applications of Mixed-Integer Linear Programming~(MILP) to deep learning.  
        MILP can be used to map inputs to outputs of each ReLU and consequently of  rectifier networks. Such formulations have been used to produce the image~\cite{LomuscioMIP,DuttaMIP} and estimate the number of pieces~\cite{serra2018bounding,serra2020empirical} of the piecewise linear function associated with the network, generate adversarial perturbations to test the network robustness~\cite{Cheng2017,FischettiMIP,tjeng2017evaluating,singh2018robustness,xiao2018training,anderson2019strong}, and implement controllers based on \dnn  models~\cite{Planning,XADD}.
        
        For each unit $i$ in layer $l$, we can map $\vh^{l-1}$ to $g_i^l$ and $h_i^l$ with a formulation that also includes a binary variable $\evz_i$ denoting if the unit is active or not, a variable $\bar{\evh}^l_i$ denoting the output of a complementary fictitious unit $\bar{\evh}^l_i = max\left\{0, -\evg_i^l \right\}$, and 
        constants $H_i^l$ and $\bar{H}_i^l$ that are positive and as large as $\evh_i^l$ and $\bar{\evh}_i^l$ can be: 
        \begin{align}
            \mW_i^l \vh^{l-1} + \evb_i^l = \evg_i^l \label{eq:mip_unit_begin} \\
            \evg_i^l = \evh_i^l - \bar{\evh}_i^l  \label{eq:mip_after_Wb_begin} \\
            \evh_i^l \leq H_i^l \evz_i^l \\ 
            \bar{\evh}_i^l \leq \bar{H}_i^l (1-\evz_i^l) \\
            \evh_i^l \geq 0 \\
            \bar{\evh}_i^l \geq 0 \\
            \evz_i^l \in \{0, 1\} \label{eq:mip_unit_end}
        \end{align}
        This formulation can be strengthened by using the smallest possible values for $H_i^l$ and $\bar{H}_i^l$~\cite{FischettiMIP,tjeng2017evaluating} and valid inequalities to avoid fractional values of $z_i^l$~\cite{anderson2019strong,serra2020empirical}. 
        
        The largest possible value of $\evg_i^l$, which we denote $\mathcal{G}_i^l$, can be obtained as 
        \begin{align}
            \mathcal{G}_i^{l'} = & \max ~ & \mW_i^{l'} \vh^{{l'}-1} + \evb_i^{l'} \label{eq:obj} \\
            &\text{s.t.} ~ &  \text{(\ref{eq:mip_unit_begin})--(\ref{eq:mip_unit_end})}  & \qquad  \forall l \in \{1, \ldots, l'-1\}, ~ i \in \sN_l \\
            && x \in \sX \label{eq:last_mi}
        \end{align}
        If $\mathcal{G}_i^l > 0$, then $\mathcal{G}_i^l$ is also the largest value of $\evh_i^l$ and it can be used for constant $H_i^l$. Otherwise, $\evh_i^l = 0$ for any input $x \in \sX$. We can also minimize $\mW_i^l \vh^{l-1} + \evb_i^l$ to obtain $\overline{\mathcal{G}}_i^l$, the smallest possible value of $\evg_i^l$, and use $-\overline{\mathcal{G}}_i^l$ for constant $\bar{H}_i^l$ if $\overline{\mathcal{G}}_i^l < 0$; whereas $\overline{\mathcal{G}}_i^l > 0$ implies that $\evh_i^l > 0$ for any input $x \in \sX$. By solving these formulations from the first to the last layer, we have the tightest values for $H_i^l$ and $\bar{H}_i^l$ for $l \in \{1, \ldots l'-1\}$ when we reach layer $l'$. 
        Units with only zero or positive outputs were first identified using MILP in~\cite{tjeng2017evaluating}, where they are denoted as \emph{stably inactive} and \emph{stably active}, and their incidence was induced with $\ell_1$ regularization. That was later exploited to accelerate the verification of robustness by making the corresponding MILP formulation easier to solve~\cite{xiao2018training}.
    
        In this paper, we use the stability of units to either remove or merge them while preserving the outputs produced by the DNN.
        %
        The same idea could be extended 
        to other architectures with MILP mappings, such as Binarized Neural Networks (BNNs)~\cite{courbariaux2016binarized,BinarizedProperties}. MILP has been used in BNNs for adversarial testing~\cite{BinarizedAttack} and along with Constraint Programming~(CP) for training~\cite{BinarizedTraining}. BNNs have characteristics that also make them suitable 
        under limited  
        resources~\cite{BinarizedProperties}.

    \subsection{Related Work}
        Our work relates to the literature on neural network compression, and more specifically to methods that simplify a trained DNN. 
        Such literature includes 
        low-rank decomposition~\cite{denton2014exploiting,jaderberg2014speeding,zhang2015efficient,wang2018wide,peng2018extreme,dubey2018coreset}, quantization~\cite{rastegari2016xnor,courbariaux2016binarized,wu2016quantized,tung2018clip}, architecture design~\cite{Szegedy2015,iandola2016squeezenet,howard2017mobilenets,huang2017densely,tang2019towards}, non-structured pruning~\cite{lin2018synaptic}, structured pruning \cite{han2015learning,li2016pruning,molchanov2016pruning,han2016dsd,luo2017thinet,aghasi2017net,yu2018nisp}, sparse learning~\cite{liu2015sparse,zhou2016less,alvarez2016learning,wen2016learning}, automatic discarding of layers in ResNets~\cite{Veit2017,yu2018learning,Herrmann2018}, variational methods~\cite{zhao2019variational}, and the recent Lottery Ticket Hypothesis~\cite{LotteryTicket} by which training only certain subnetworks in the DNN --- the \emph{lottery tickets} --- 
        might be good enough. 
        However, network compression is often achieved with side effects to the function associated with the DNN. 
        
        In contrast to many lossy pruning methods that typically focus on removing unimportant neurons and connections, our approach focuses on developing lossless transformations that exactly preserve the expressiveness during the compression. A necessary criterion for equivalent transformation is that the resulting network is as expressive as the original one. Methods to study neural network expressiveness include universal approximation theory~\cite{Cybenko1989}, VC dimension~\cite{Bartlett1998}, trajectory length~\cite{raghu2017expressive}, and linear regions~\cite{pascanu2013on,montufar2014on,montufar2017notes,raghu2017expressive,arora2018understanding,serra2018bounding,hanin2019complexity,hanin2019deep,serra2020empirical}.
        
        We can also consider our approach as a form of reparameterization, or equivalent transformation, in graphical models~\cite{Koval1976,Wainwright2004,Werner2005}. If two parameter vectors $\theta$ and $\theta'$ define the same energy function (i.e., $E(\vx|\theta)) = E(\vx|\theta'),\forall~\vx$), then $\theta'$ is called a  reparameterization of $\theta$. Reparameterization has played a key role in several inference problems such as belief propagation~\cite{Wainwright2004}, tree-weighted message passing~\cite{Wainwright2005}, and graph cuts~\cite{Kolmogorov2007}. The idea is also associated with characterizing the functions that can be represented by \dnns~\cite{Hornik1989,Cybenko1989,Telgarsky2016,Lin2018,arora2018understanding,FunctionApproximation,kumar2019equivalent}.
        
        Finally, our work complements the vast literature at the intersection of mathematical optimization and Machine Learning~(ML). General-purpose  methods have been applied to train ML models to optimality~\cite{ClassificationTrees,TrainingLP,BinarizedTraining,CAQL}. Conversely, ML models have been extensively applied in optimization~\cite{CombOptTour,OptimizationSurvey}. To mention a few lines of work,  ML has been used to find feasible solutions~\cite{FeasibilityCP,FeasibilityMIP} and predict good solutions~\cite{EndToEnd,KnapsackPrediction}; determine how to branch on~\cite{BranchingLearning1,BranchingLearning2,BranchingSurvey,BranchingLearning3,HeuristicTreeSearch} or add cuts~\cite{RLCut}, when to linearize~\cite{LinearizationLearning}, or when to decompose~\cite{DecompositionLearning} an optimization problem; how to adapt algorithms for each problem~\cite{AdaptBrown,AdaptUBC,AdaptPatterns,AdaptSurvey,LearnGraphAlgo,LearnTSP1,LearnTSP2}; obtain better optimization bounds~\cite{BetterBounds}; embed the structure of the problem as a layer of a neural  network~\cite{OptNet,DDGAN,agrawal2019differentiable,MIPaaL}; and  predict the resolution by a time-limit~\cite{ResolutionLearning}, the feasibility of the problem~\cite{FeasibilityPrediction}, and the 
        problem itself~\cite{SPO,ModelingML,DeepInverse}.

\section{\texttt{LEO}: Lossless Expressiveness Optimization}
    Algorithm~\ref{alg:compress}, which we denote \texttt{LEO}~(Lossless Expressiveness Optimization), 
    loops over the layers to remove units with constant outputs regardless of the input, some of the stable units, and any layers with constant output due to those two types of units. These modifications of the network architecture are followed by changes to the weights and biases of the remaining units in the network to preserve the outputs produced. The term \emph{expressiveness} is commonly used 
    to refer to the ability of a network architecture to represent complex functions~\cite{serra2020empirical}.  
    
    First, \texttt{LEO} checks the weights and stability of each unit and decides whether to immediately remove them. A unit with constant output, which is either stably inactive or has zero input weights, is removed as long as there are other units left in the layer. A stably active unit with varying output is removed if the column of weights of that unit is linearly dependent on the column of weights of stably active units with varying outputs that have been previously inspected in that layer. We consider the removal of such stably active units as a \emph{merging} operation since the output weights of other stable units need to be adjusted as well. 
    
    Second, \texttt{LEO} checks if layers can be removed in case the units left in the layer are all stable or have constant output. If they are all stably active with varying output, then the layer is removed by directly joining the layers before and after it, which we denote as a \emph{folding} operation. In the particular case that only one unit is left with constant output, be it stably inactive or not, then all hidden layers are removed because the network has an associated function that is constant in $\sD$. We denote the latter operation as \emph{collapsing} the neural network. 
    
    Figure~\ref{fig:example} shows examples of units being removed and merged on the left as well as of a layer being folded on the right. Although possible, folding or collapsing a trained neural network is not something that we would expect to achieve in practice unless we are compressing with respect to a very small domain $\sD \subset \sX$.
    
    \begin{figure}
        \begin{center}
            \includegraphics[width=\textwidth]{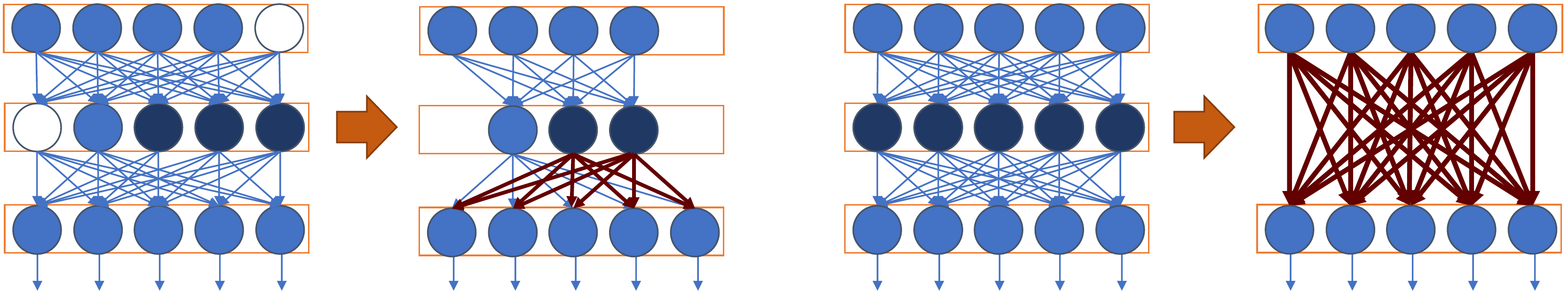}
            \caption{Examples of output-preserving neural network compression obtained with \texttt{LEO}. 
            On the left, two units in white are stably inactive and three units indexed by set $S$ in darker blue are stably active, where rank($\mW^2_S$)=2. In such a case, we can remove the stably inactive units and merge the stably active units to produce the same input to the next layer using only two units. 
            On the right, an entire layer is stably active. In such a case, we can fold the layer by directly connecting the layers before and after it. In both cases, the  red arcs correspond to coefficients that need to be adjusted accordingly. 
            } \label{fig:example}
        \end{center}
    \end{figure}

    \begin{algorithm}[!tb]
    \caption{\texttt{LEO} produces a smaller equivalent neural network with respect to a domain $\sD$ by removing units and layers while adjusting weights and biases} 
    \label{alg:compress}
    {\footnotesize
    \begin{algorithmic}[1]
    \For{$l \gets 1, \ldots, L$}
    \State $S \gets \{ \}$ \Comment{Set of stable units left in layer $l$}
    \State Unstable $\gets$ False \Comment{If there are unstable units in layer $l$}
    \For{$i \gets 1, \ldots, n_l$}
    \If{$\mathcal{G}_i^l < 0$ for $x \in \sD$ or $\mW_i^l=$\textbf{0}} \label{lin:inactive}
    \Comment{Unit $i$ has constant output}
    \If{$i < n_l$ \textbf{or} $|S| > 0$ \textbf{or} Unstable} \label{lin:one_inactive}
    \Comment{Layer $l$ still has other units} 
    \If{$\mW_i^l=$ \textbf{0} and $b_i^l > 0$}
    \For{$j \gets 1, \ldots, n_{l+1}$} \Comment{Adjust activations in layer $l+1$} 
    \State $b^{l+1}_j \gets b^{l+1}_j + w^{l+1}_{ji} b^l_i$
    \EndFor
    \EndIf
    \State Remove unit $i$ from layer $l$
    \Comment{Unit $i$ is not necessary}
    \EndIf
    \ElsIf{$\bar{\mathcal{G}}_i^l > 0$ for $x \in \sD$} \label{lin:active}
    \Comment{Unit $i$ is stably active}
    \If{rank$\left(\mW^l_{S \cup \{ i \}}\right) > |S|$} \Comment{$w^l_i$ is linearly independent} \label{lin:li_active} 
    \State $S \gets S \cup \{ i \}$ \Comment{Keep unit in the network}
    \Else \Comment{Output of unit $i$ is linearly dependent} \label{lin:ld_active}
    \State Find $\{ \alpha_k \}_{k \in S}$ such that $w^l_i = \sum_{k \in S} \alpha_k w^l_k$
    \For{$j \gets 1, \ldots, n_{l+1}$} \Comment{Adjust activations in layer $l+1$} 
    %
    \State $w^{l+1}_{jk} \gets w^{l+1}_{jk} + \sum_{k \in S} \alpha_k w^{l+1}_{ji}$
    \State $b^{l+1}_j \gets b^{l+1}_j + w^{l+1}_{ji} (b^l_i + \sum_{k \in S} \alpha_k  b^l_k)$
    \EndFor
    \State Remove unit $i$ from layer $l$ 
    \Comment{Unit $i$ is no longer necessary}
    \EndIf
    \Else \label{lin:unstable}
    \State Unstable $\gets$ True 
    \Comment{At least one unit is not stable}
    \EndIf
    \EndFor
    \If{\textbf{not} Unstable} \Comment{All units left in layer $l$ are stable}
    \If{$|S|>0$}  \label{lin:fold}
    \Comment{The units left have varying outputs}
    \State Create matrix $\bar{\mW} \in \mathbb{R}^{n_{l-1} \times n_{l+1}}$ and vector $\bar{\vb} \in \mathbb{R}^{n_{l+1}}$
    \For{$i \gets 1, \ldots, n_{l+1}$} \Comment{Directly connect layers $l-1$ and $l+1$} 
    %
    \State $\bar{b}_i \gets b^{l+1}_i + \sum_{k \in S} w^{l+1}_{ik} b^l_k$
    \For{$j \gets 1, \ldots, n_{l-1}$}
    \State $\bar{w}_{ij} \gets \sum_{k \in S} w^l_{kj} w^{l+1}_{ik}$
    \EndFor
    \EndFor
    \State Remove layer $l$; replace parameters in next layer with $\bar{\mW}$ and $\bar{\vb}$ 
    \Else \label{lin:collapse}
    \Comment{Only unit left in layer $l$ has constant output}
    \State Compute output $\Upsilon$ for any input $\chi \in \sD$ \Comment{Function is constant} 
    %
    \State $(\mW^{L+1},\vb^{L+1}) \gets$ $($\textbf{0}$, \Upsilon)$
    \Comment{Set constant values in output layer}
    \State Remove layers 1 to $L$ and \textbf{break} \Comment{Remove all hidden layers and leave}
    \EndIf
    \EndIf
    \EndFor
    \end{algorithmic}
    }
    \end{algorithm}

    \begin{theorem}\label{th:1}
    For a neural network \dnnone, Algorithm~\ref{alg:compress} produces a neural network \dnntwo~such that \dnnone~and \dnntwo~are local equivalent with respect to an input domain $\sD$. 
    \end{theorem}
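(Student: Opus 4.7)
The plan is to proceed by induction on the sequence of modifications performed by Algorithm~\ref{alg:compress}, maintaining the invariant that after each modification the current network remains local equivalent to \dnnone{} with respect to $\sD$. The base case is trivial since no modification has yet occurred, and at termination the current network is \dnntwo{}, so the conclusion follows from Definition~\ref{def:equivalent_networks_2}. To carry out the inductive step I would fix an arbitrary $\vx \in \sD$, propagate it through the partially modified network, and check that each operation preserves the pre-activation of every unit in every subsequent layer, which immediately implies preservation of the final output.

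The inductive step splits into four cases corresponding to the branches of the algorithm. \emph{(i) Removing a constant-output unit $i$:} either $\mathcal{G}_i^l < 0$ on $\sD$, so $\evh_i^l = 0$ and nothing downstream changes; or $\mW_i^l = \vzero$, so $\evh_i^l = \max\{0,\evb_i^l\}$ is a constant whose contribution to layer $l+1$ is absorbed by the bias update $\evb^{l+1}_j \gets \evb^{l+1}_j + w^{l+1}_{ji}\evb^l_i$ when $\evb_i^l > 0$, and is already zero when $\evb_i^l \leq 0$. \emph{(ii) Merging a stably active unit $i$ whose weights satisfy $\mW^l_i = \sum_{k\in S}\alpha_k \mW^l_k$:} stable activity of all units in $S\cup\{i\}$ on $\sD$ gives $\evh_i^l = \mW^l_i\vh^{l-1}+\evb^l_i = \sum_k \alpha_k \evh_k^l + (\evb_i^l - \sum_k \alpha_k \evb_k^l)$, and substituting this identity into $\mW^{l+1}_j\vh^l+\evb^{l+1}_j$ yields the outgoing-weight and bias updates coded in the algorithm. \emph{(iii) Folding a stably active layer $l$:} on $\sD$ we have $\vh^l = \mW^l\vh^{l-1}+\vb^l$, so $\mW^{l+1}\vh^l+\vb^{l+1} = (\mW^{l+1}\mW^l)\vh^{l-1}+(\mW^{l+1}\vb^l+\vb^{l+1})$, which matches the $\bar{\mW}$ and $\bar{\vb}$ assembled by the algorithm. \emph{(iv) Collapsing:} if the only remaining unit in layer $l$ has constant output on $\sD$, the whole network realizes a constant function on $\sD$, so replacing every hidden layer by an output layer with weights $\vzero$ and that constant as its bias preserves the function.

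The main obstacle is case~(ii): the inner loop removes linearly dependent units one at a time, so the updates to $w^{l+1}_{jk}$ and $\evb^{l+1}_j$ must compose correctly across iterations, and the coefficients $\{\alpha_k\}$ are defined relative to the \emph{currently surviving} weights $\mW^l_k$, $k\in S$, rather than to the original ones. A secondary induction inside the inner \textbf{for} loop will be needed to verify that the successive substitutions still reproduce the original pre-activations of layer $l+1$. A related subtlety is that the stability tests $\mathcal{G}_i^l$ and $\overline{\mathcal{G}}_i^l$ are evaluated on the partially modified network rather than on \dnnone{}, so the outer invariant must additionally imply that the set of pre-activations of each layer reachable from $\sD$ coincides with that of \dnnone{}, which follows directly from the inductive hypothesis applied up to layer $l-1$.
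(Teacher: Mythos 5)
Your proposal is correct and follows essentially the same route as the paper's proof: a case analysis over the four operations (removing constant-output units, merging linearly dependent stably active units via $\evh_i^l = \sum_{k\in S}\alpha_k \evh_k^l + (\evb_i^l - \sum_{k\in S}\alpha_k \evb_k^l)$, folding a fully stably active layer, and collapsing to a constant), each shown to preserve the pre-activations of the next layer on $\sD$. Your explicit inductive framing and your flagged subtleties (composition of successive merges within a layer, and stability being tested on the partially modified network) are points the paper's proof handles only implicitly, so they are a welcome refinement rather than a divergence.
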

    \begin{proof}
        If $\mathcal{G}_i^l < 0$, then $h^l_i = 0$ for any input in $\sD$ and unit $i$ in layer $l$ can be regarded as stably inactive. Otherwise, if $\mW_i^l =$ \textbf{0}, then the output of the unit is positive but constant. Those two types of units are analyzed by the block starting at line~\ref{lin:inactive}.  If there are other units left in the layer, which are either not stable or stably active but not removed, then removing unit a stably inactive unit $i$ does not affect the output of subsequent units since the output of the unit is always 0 in $\sD$. Likewise, in the case that $\mW_i^l =$ \textbf{0} and $b^l_i > 0$, then 
        the output of the network remains the same after removing that unit if such removal of $h_i^l$ from  each unit $j$ in layer $l+1$ is followed by adding
        $w^{l+1}_{ji} b^l_i$ to $b_j^{l+1}$.

        If $\bar{\mathcal{G}}_i^l > 0$, then $h_i^l = \mW^l_i h^{l-1} + b_i^l$ for any input in $\sD$ and unit $i$ in layer $l$ can be regarded as stably active. Those units are analyzed by the block starting at line~\ref{lin:active}. If the rank of the submatrix $\mW^l_S$ consisting of the weights of stably active units in set $S$ is the same as that of $\mW^l_{S \cup \{i\}}$ and given that $\mW_i^l \neq$ \textbf{0} for every $i \in S$, then $S \neq \emptyset$ and $h^l_i = \sum\limits_{k \in S} \alpha_k w_k^l h^{l-1} + b_i^l = \sum\limits_{k \in S} \alpha_k (h^l_k - b^l_k) + b_i^l$. Since there would be other units left in the layer, the output of the network remains the same after removing the unit if such removal of $h^l_i$ from  each unit $j$ in layer $l+1$ is followed by adding $\alpha_k w^{l+1}_{ji}$ to $w^{l+1}_{jk}$ and $w^{l+1}_{ji} \left( b_i^l - \sum\limits_{k \in A} \alpha_k b^l_k \right)$ to $b_j^{l+1}$. 
        
        If all units left in layer $l$ are stably active and $|S|>0$, 
        then layer $l$ is equivalent to an affine transformation and it is possible to 
        directly connect layers $l-1$ and $l+1$,   
        as in 
        the block starting at line~\ref{lin:fold}. 
        Since $h^l_k = W^l_j h^{l-1} + b^l_k$ for each stably active unit $k$ in layer $l$, 
        then $h^{l+1}_i = W^{l+1}_i h^l + b^{l+1}_i =  W^{l+1}_i \left( \sum\limits_{k=1}^{n_l} W^l_k h^{l-1} + b^l_k \right) + b^{l+1}_i = \sum\limits_{j \in n_{l-1}} \left( \sum\limits_{k \in S} w^l_{kj} w^{l+1}_{ik} \right) h^{l-1}_j + b^{l+1}_i + \left( \sum\limits_{k \in S} w^{l+1}_{ik} b^l_k \right)$.
        
        If the only unit $i$ left in layer $l$ is stably inactive or stably active but has zero weights, then any input in $\sD$ results in 
        $h_i^l = \max\{0,b_i^l\}$, 
        and consequently the neural network is associated with a constant function $f : x \rightarrow \Upsilon$ in $\sD$. Therefore, it is possible to remove all hidden layers and replace 
        the output layer with a constant function mapping to $\Upsilon$ as in the block starting at line~\ref{lin:collapse}. $\Box$
    \end{proof}
    
    \paragraph{Implementation} We do not need to solve (\ref{eq:obj})--(\ref{eq:last_mi}) to optimality to determine if $\mathcal{G}_i^l < 0$: it suffices to find a negative upper bound to guarantee that, or a solution with positive value to refute that. A similar reasoning applies to $\bar{\mathcal{G}}_i^l > 0$.


\pagebreak

\section{Experiments}
    We conducted experiments to evaluate the potential for network compression using \texttt{LEO}. 
    In these experiments, 
    we trained rectifier networks on the MNIST dataset~\cite{LeCun1998} with input size 784, two hidden layers of same width, and 10 outputs. 
    The widths of the hidden layers are 25, 50, and 100. 
    For each width, we identified in preliminary experiments a weight for $\ell_1$ regularization on layer weights that improved the network accuracy in comparison with no regularization: 0.0002, 0.0002, and 0.0001, respectively. 
    We trained 31 networks with that  regularization weight, with 5 times the same weight to induce more stability, and with zero weight as a benchmark. 
    We use the negative log-likelihood as the loss function after taking a softmax operation on the output layer, 
    a batch size of $64$ and SGD with a learning rate of $0.01$, and momentum of $0.9$ for training the model to $120$ epochs. The learning rate is decayed by a factor of $0.1$ after every $50$ epochs. The weights of the network were initialized with the Kaiming initialization~\cite{He2016DeepRL} and the biases were initialized to zero. The models were trained using  Pytorch~\cite{paszke2017automatic} 
    on a machine 
    with 40 Intel(R) Xeon(R) CPU E5-2640 v4 @ 2.40GHz processors and 132 GB of RAM.  
    The MILPs were solved 
    using Gurobi 8.1.1~\cite{gurobi} on a machine 
    with Intel(R) Core(TM) i5-6200U CPU @ 2.30 GHz and 16 GB of RAM. We used callbacks to check 
    bounds and solutions and then interrupt the solver after determining unit stability and bounds for each MILP. 

Tables~\ref{tab1} and \ref{tab2} summarize our experiments with mean and standard error. We note that the compression grows with the size of the network and the weight of $\ell_1$ regularization, which induces the weights of each unit to be orders of magnitude smaller than its bias. The  compression identified is all due to removing stably inactive units. Most of the runtime is due to solving MILPs for the second hidden layer. Given the incidence of stably active units, we conjecture that inducing rank deficiency in the weights or negative values in the biases could also be beneficial. 

\begin{table}
\caption{Additional details about the experiments for each type of network, including runtime per test, incidence of stably active units, and overall network stability.}
\label{tab2}
\centering
\begin{tabular}{@{\extracolsep{6pt}}cccccc}
&&& \multicolumn{2}{c}{Stably active units} &  Network \\
\cline{4-5}
Layer width & $\ell_1$ weight & Runtime~(s) &  $1^{\text{st}}$ layer & $2^{\text{nd}}$ layer &  stability~(\%) \\
\hline
25 & 0.001 & 27.9 $\pm$ 0.3 & 2.5 $\pm$ 0.3 & 7.4 $\pm$ 0.4 & 41.3 $\pm$ 0.6 \\ 
25 & 0.0002 & 29 $\pm$ 1 & 0 $\pm$ 0 & 1.0 $\pm$ 0.2 & 10.4 $\pm$ 0.7 \\ 
25 & 0 & 28.4 $\pm$ 0.3 & 0 $\pm$ 0 & 0 $\pm$ 0 & 0 $\pm$ 0 \\ 
\hline
50 & 0.001 & 103 $\pm$ 2 & 15 $\pm$ 0.5 & 24.9 $\pm$ 0.6 & 69.3 $\pm$ 0.4 \\ 
50 & 0.0002 & 106 $\pm$ 3 & 2.7 $\pm$ 0.3 & 8.8 $\pm$ 0.5 & 26.6 $\pm$ 0.6 \\ 
50 & 0 & 112 $\pm$ 3 & 0 $\pm$ 0 & 0 $\pm$ 0 & 0 $\pm$ 0 \\ 
\hline
100 & 0.0005 & 421 $\pm$ 4 & 35.7 $\pm$ 0.6 & 57.7 $\pm$ 0.7 & 77.5 $\pm$ 0.2 \\ 
100 & 0.0001 & 456 $\pm$ 8 & 11.1 $\pm$ 0.5 & 18 $\pm$ 0.7 & 29.4 $\pm$ 0.5 \\ 
100 & 0 & 385 $\pm$ 2 & 0 $\pm$ 0 & 0 $\pm$ 0 & 0 $\pm$ 0 \\ 
\end{tabular}
\end{table}

\section{Conclusion}\label{sec:conclusion}

We introduced a lossless neural network compression algorithm, \texttt{LEO}, which relies on MILP to identify 
parts of the neural network that can be safely removed after reparameterization. 
We found that networks trained with $\ell_1$ regularization are particularly amenable to such compression. In a sense, we could interpret $\ell_1$ regularization as inducing a subnetwork to represent the function associated with the DNN. 
Future work may explore the connection between these subnetworks identified by \texttt{LEO} and lottery tickets, 
bounding techniques such as in~\cite{WongKolter} to help efficiently identifying stable units, and 
other forms of inducing posterior compressibility while training.
Concomitantly, we have shown another form in which discrete optimization can play a key role in deep learning applications.

\bibliographystyle{splncs04}
\bibliography{reference}

\end{document}